\documentclass{article}

\PassOptionsToPackage{numbers, compress}{natbib}

\usepackage[preprint]{neurips_2026}

\usepackage[utf8]{inputenc}
\usepackage[T1]{fontenc}
\usepackage[bookmarks=false]{hyperref}
\usepackage{url}
\usepackage{booktabs}
\usepackage{amsfonts}
\usepackage{amsmath}
\usepackage{amssymb}
\usepackage{nicefrac}
\usepackage{microtype}
\usepackage{xcolor}
\definecolor{citeblue}{RGB}{0, 148, 0}
\definecolor{citebluee}{RGB}{0, 148, 148}  
\hypersetup{
    colorlinks=true,
    linkcolor=red,     
    citecolor=citeblue, 
    urlcolor=citebluee,  
}
\usepackage[table]{xcolor}
\usepackage{graphicx}
\usepackage{algorithm}
\usepackage{algorithmic}
\usepackage{multirow}
\usepackage{subcaption}
\usepackage{amsthm}
\usepackage{makecell}

\newtheorem{definition}{Definition}
\newtheorem{theorem}{Theorem}
\newtheorem{proposition}{Proposition}

\definecolor{l1color}{RGB}{156,179,210}
\definecolor{l2color}{RGB}{228,170,122}
\definecolor{l3color}{RGB}{172,120,132}
\definecolor{l4color}{RGB}{145,185,175}

\usepackage{utfsym} 
\newcommand{\heart}{$\;\!$\usym{2665}}

\title{Self-Evolving Spatial Reasoning in Vision Language Models via Geometric Logic Consistency}

\author{
Junming Liu$^{1}$, 
Yuqi Li$^{2}$, 
Yifei Sun$^{1}$, 
Maonan Wang$^{3}$, \\[1mm]
\textbf{
Piotr Koniusz$^{4,5,6}$\thanks{Corresponding authors}\ \ , 
Yirong Chen$^{1}$\footnotemark[1]\ \ , 
and Ding Wang$^{1}$}\footnotemark[1]\ \  \\[2mm]
$^{1}$Shanghai Artificial Intelligence Laboratory, \\
$^{2}$The City University of New York,
$^{3}$The Chinese University of Hong Kong, \\
$^{4}$Data61$\!${\color{red}\heart}CSIRO, 
$^{5}$University of New South Wales, \\
$^{6}$Australian National University \\
}

\begin{document}

\maketitle

\begin{abstract}
Vision-Language Models (VLMs) have made striking progress, yet their spatial reasoning remains fragile: models that answer an original input correctly can still fail under paired transformations with predictable answer mappings, revealing a gap between instance-level correctness and robust spatial reasoning. To address this, we propose \textbf{S}patial \textbf{A}lignment via \textbf{G}eometric \textbf{E}volution (SAGE), a self-evolving framework that enforces logical consistency in VLMs through geometric and linguistic duality operations. SAGE incorporates duality consistency as an auxiliary reward within GRPO training, encouraging models to produce logically coherent answers across original and transformed inputs. A dynamic operation pool continuously probes for inconsistencies, promoting challenging operations and retiring mastered ones, so that training focuses on the most informative signals. SAGE is model-agnostic, data-efficient compared to prior GRPO methods, and can be applied as a lightweight post-training stage to any existing VLM. Experiments on video and spatial reasoning benchmarks demonstrate consistent improvements over strong baselines and enhanced generalization to unseen data.
\end{abstract}

\section{Introduction}
\label{sec:intro}

Vision-Language Models (VLMs) \cite{Liu_2023_LLaVa, Singh_2025_GPT-5, Comanici_2025_Gemini2.5, Bai_2025_Qwen3-VL} have advanced rapidly in recent years, evolving from basic perception toward comprehensive world understanding and multimodal reasoning. Despite this progress, their representation of spatial relationships and physical scenes remains limited~\cite{Cheng_2024_SpatialRGPT}. When asked about object positions, motion directions, relative distances, or depth orderings, VLMs often produce confident but incorrect answers, exhibiting spatial hallucinations that undermine their reliability in embodied navigation \cite{Zheng_2024_Embodied} and autonomous driving \cite{Zhou_2026_Autonomous}. Since these applications require models to reason consistently about how objects are arranged and how scenes change under viewpoint or geometric transformations, improving spatial reasoning has become a central challenge for multimodal intelligence.

Recent methods have applied Reinforcement Learning (RL) \cite{Ouyang_2022_Training} to improve VLM reasoning, including spatial and temporal reasoning tasks \cite{Zhao_2025_Embodied-R, Li_2026_SpatialLadder, Pan_2026_MetaSpatial}. These methods design reward signals to encourage desirable behaviors such as correct answers, improved temporal modeling \cite{Feng_2025_Video-R1}, or more precise geometric reasoning \cite{Wu_2025_VILASR}. However, existing objectives do not directly enforce consistency across paired formulations of the same spatial relation. For example, after a horizontal flip, the correct answer to a left/right question should change predictably; if a model answers the original input correctly but fails on its flipped counterpart, it exposes a gap between answer correctness and robust spatial reasoning. As shown in Figure~\ref{fig:intro}, such transformed questions are often intuitive to humans but remain challenging for VLMs. Moreover, although related forms of transformations have been considered in prior work~\cite{Fang_2025_ViSS-R1, Wang_2026_SVQA-R1, Zhou_2026_PoT}, they are typically treated as predefined augmentations or task-specific perturbations rather than formalized as duality operations that can adapt to the model's evolving weaknesses. 

\begin{figure}[t]
    \centering
    \includegraphics[width=\linewidth]{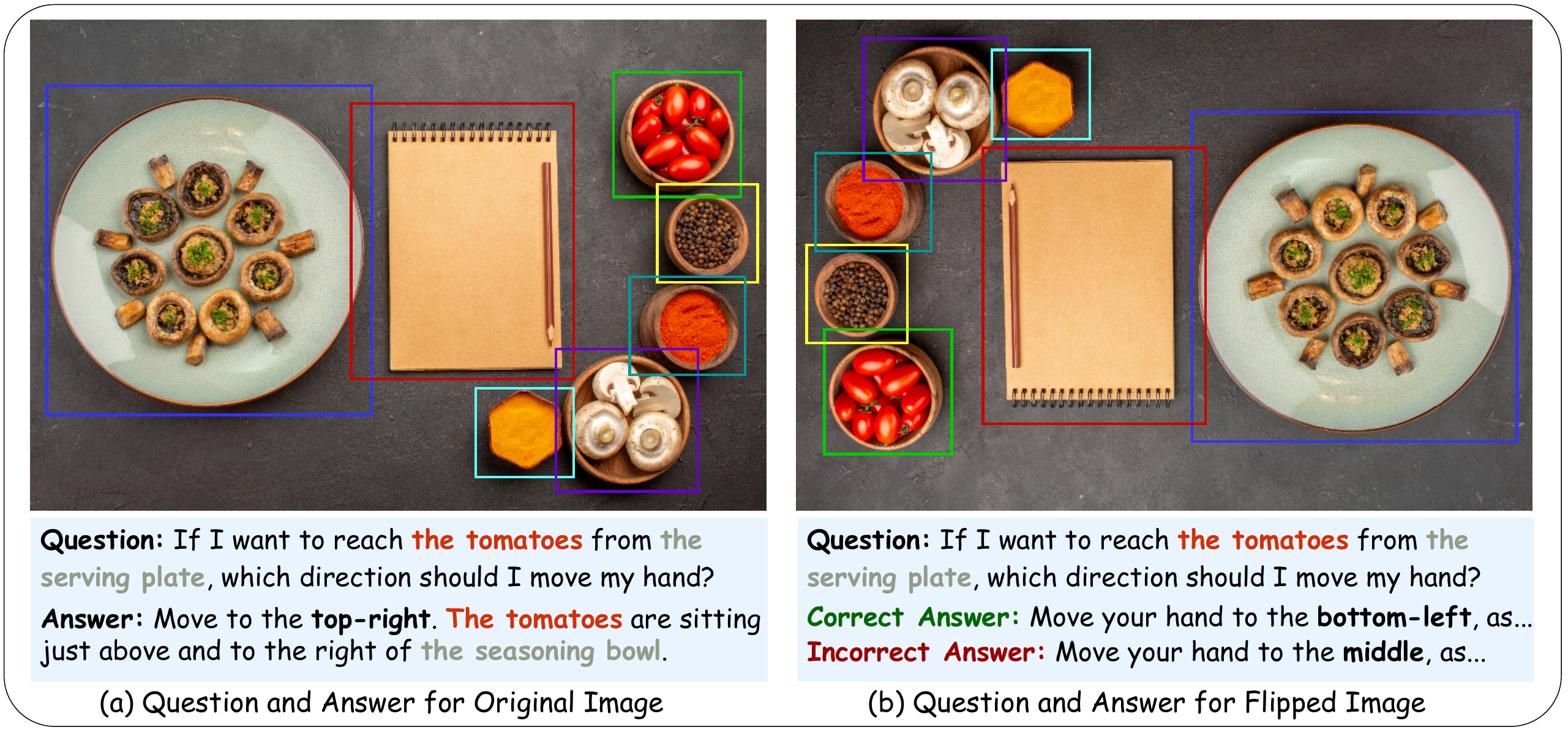}
    \caption{Analysis of spatial consistency in VLMs under geometric transformations.}
    \label{fig:intro}
\end{figure}

To address this question, we propose \textbf{S}patial \textbf{A}lignment via \textbf{G}eometric \textbf{E}volution (SAGE), a self-evolving post-training framework that improves VLM spatial reasoning by enforcing duality consistency. SAGE formalizes paired transformations as \emph{duality operations}, including geometric manipulations of the visual input, such as spatial reflections, and structural perturbations of the query, such as logical negation. Given an original sample and its dual counterpart, SAGE encourages the model to produce logically consistent answers under the known transformation mapping. We integrate this consistency signal as an auxiliary reward within GRPO training, providing supervision beyond standard answer accuracy. Crucially, SAGE does not rely on a fixed set of transformations throughout training. Instead, it maintains a dynamic pool of duality operations that periodically probes the model for inconsistencies, identifies problem types that appear solved on the original input but fail under dual formulations, and promotes those operations for active training. In this sense, the training distribution evolves according to the model's observed inconsistency profile, allowing SAGE to focus on current reasoning vulnerabilities while keeping computational overhead modest through bounded active operations and periodic batched evaluation.

We evaluate SAGE across a broad set of video understanding and spatial reasoning benchmarks to determine whether duality consistency improves generalization beyond the specific transformations used during training. Experiments on six video understanding benchmarks, including MVBench \cite{Li_2024_MVBench, Shahroudy_2016_NTU}, TempCompass \cite{Liu_2024_TempCompass}, Video-MME \cite{Fu_2025_Video-MME}, Video-MMMU \cite{Hu_2025_Video-MMMU}, VSI-Bench \cite{Yang_2025_VSI-Bench}, and MMVU \cite{Zhao_2025_MMVU}, as well as seven spatial reasoning benchmarks: CV-Bench \cite{Tong_2024_CV-Bench}, SPAR-Bench \cite{Zhang_2026_SPAR-Bench}, ViewSpatial-Bench \cite{Li_2025_ViewSpatial-Bench}, MMSI-Bench \cite{Yang_2026_MMSI-Bench}, MindCube \cite{Yin_2025_MindCube}, OST-Bench \cite{Lin_2026_OST-Bench}, and OmniSpatial \cite{Jia_2026_OmniSpatial}, show consistent improvements on the majority of benchmarks. These results suggest that duality consistency provides a complementary post-training signal to task-specific rewards and helps expose reasoning failures that standard accuracy alone may overlook.

Our contributions are summarized as follows:
\begin{enumerate}
    \item We introduce duality consistency as a general training signal for VLM spatial reasoning, encompassing both visual and linguistic duality operations, that enforces logical coherence across equivalent inputs without additional annotations.
    \item We propose SAGE, a self-evolving post-training framework that integrates duality consistency into GRPO. SAGE maintains a dynamic pool of duality operations and adaptively prioritizes transformations that reveal the model's current reasoning vulnerabilities, enabling training to focus on evolving inconsistency patterns.
    \item Across six video understanding benchmarks and seven spatial reasoning benchmarks, SAGE improves duality consistency and achieves competitive or superior performance using substantially less training data than prior GRPO-based methods, showing that consistency-based rewards complement task-specific post-training.
\end{enumerate}

\section{Related Work}
\label{sec:related}

\subsection{Vision Language Models} 

The rapid evolution of VLMs has significantly transformed multimodal perception \cite{Zhang_2023-Video-LLaMA, Maaz_2023_Video-ChatGPT, Zhang_2024_Survey, Shu_2025_Video-XL}. Foundational frameworks like CLIP \cite{Radford_2021_CLIP} and BLIP \cite{Li_2022_Blip} established cross-modal alignment, while subsequent architectures such as LLaVA \cite{Liu_2023_LLaVa}, Flamingo \cite{Alayrac_2022_Flamingo}, and Qwen3-VL \cite{Bai_2025_Qwen3-VL} advanced the field by integrating visual encoders with LLMs for sophisticated reasoning. Beyond architectural scaling, innovative strategies have emerged to refine model efficiency and perception. For instance, CoOp \cite{Zhou_2022_CoOp} introduced learnable context optimization, while VisionZip \cite{Yang_2025_VisionZip} and VisionThink \cite{Yang_2025_VisionThink} improve inference efficiency through visual token compression and reinforcement learning-based resolution scaling, respectively. Furthermore, recent works like CoFFT \cite{Zhang_2025_COFFT} and FOCUS \cite{Yang_2025_FOCUS} have pushed the boundaries of visual reasoning and interactive editing by incorporating iterative foresight-focus decoding and unified segmentation-aware perception. Despite these advancements, existing VLMs often prioritize surface-level statistical patterns over genuine logical invariances \cite{Berglund_2024_Curse}. Consequently, they frequently exhibit reasoning inconsistencies and struggle to fully comprehend complex spatial and temporal geometries, particularly when faced with Out-Of-Distribution (OOD) visual arrangements \cite{Liu_2023_Visual}.

\subsection{Reinforcement Learning for Reasoning}

The integration of Reinforcement Learning from Human Feedback (RLHF) \cite{Christiano_2017_RLHF, Stiennon_2020_Learning, Ouyang_2022_Training} has become a standard paradigm for aligning large models, with algorithms like PPO \cite{Schulman_2017_PPO} and DPO \cite{Rafailov_2023_DPO} paving the way for goal-directed optimization. Building on these, Group Relative Policy Optimization (GRPO) \cite{Shao_2024_GRPO} has emerged as a more efficient alternative by computing advantages within sampled groups, thereby eliminating the need for a separate value network and simplifying the integration of custom reward signals. Early explorations utilized VLMs as zero-shot reward models to guide agent behavior \cite{Rocamonde_2024_VLM-RMs}, while Zhai et al. \cite{Zhai_2024_Fine-Tuning} leveraged RL to fine-tune VLMs as decision-making agents via Chain-of-Thought (CoT) \cite{Wei_2022_CoT} exploration. More recently, VL-Rethinker \cite{Wang_2025_VL-Rethinker} adapted GRPO to incentivize self-reflection in multimodal reasoning, and VAPO \cite{Tian_2026_VAPO} introduced vision-anchored rewards to mitigate the "visual forgetting" typically associated with prolonged reasoning. Our work similarly adopts the GRPO framework, as its flexible reward structure allows us to incorporate geometric and linguistic duality as self-correcting signals. By verifying consistency automatically across semantically equivalent transformations, SAGE enables a self-evolving training process that enhances spatial reasoning without the burden of extensive manual annotations.

\subsection{Spatial reasoning in VLMs}

Spatial reasoning remains a persistent challenge for vision-language models, as they often struggle with 3D geometries and relative object configurations \cite{Liu_2023_Visual, Chen_2024_SpatialVLM, Chen_2025_Why}. Early efforts like SpatialRGPT \cite{Cheng_2024_SpatialRGPT} and SpatialVLM \cite{Chen_2024_SpatialVLM} addressed these limitations by introducing depth-aware plugins or internet-scale metric spatial data, while AdaptVis \cite{Chen_2025_Why} explored inference-time attention sharpening to rectify misdirected focus. Recently, the R1-style reinforcement learning paradigm has been extended to spatial domains. Video-R1 \cite{Feng_2025_Video-R1} utilized T-GRPO to incentivize temporal modeling, and ViLaSR \cite{Wu_2025_VILASR} integrated interwoven thinking with visual drawing. Furthermore, frameworks like MetaSpatial \cite{Pan_2026_MetaSpatial} and SpatialLadder \cite{Li_2026_SpatialLadder} proposed physics-aware modulation and progressive training hierarchies to bridge the gap between perception and reasoning.
More recently, SVQA-R1 \cite{Wang_2026_SVQA-R1} employed view-consistent rewards via static spatial perturbations, but its training signal remains static and cannot adapt to the model’s evolving weaknesses. While Faithful GRPO \cite{Kancheti_2026_Faithful} and SpatialEvo \cite{Li_2026_SpatialEvo} introduce forms of self-correction or evolving environments, they are not designed to enforce transformation-level duality consistency. 
As a result, these methods fail to guarantee that the underlying spatial logic is truly mastered; a model might pass these specialized constraints yet still fail on fundamental dual transformations, leading to poor generalization on OOD tasks.

\section{Method}
\label{sec:method}

Robust spatial reasoning requires more than answering individual questions correctly. Model predictions should remain logically coherent across paired formulations whose answers are related by known transformation rules. SAGE enforces this principle through three stages: (1) \emph{duality probing}, which evaluates model predictions on original--dual input pairs to identify reasoning inconsistencies; (2) \emph{self-evolving operation pool}, which autonomously discovers and schedules transformation operations; and (3) \emph{consistency-augmented GRPO}, which converts these signals into training rewards. The overall workflow is illustrated in Figure~\ref{fig:framework}. 

\begin{figure}[t]
\centering
\includegraphics[width=1.0\linewidth]{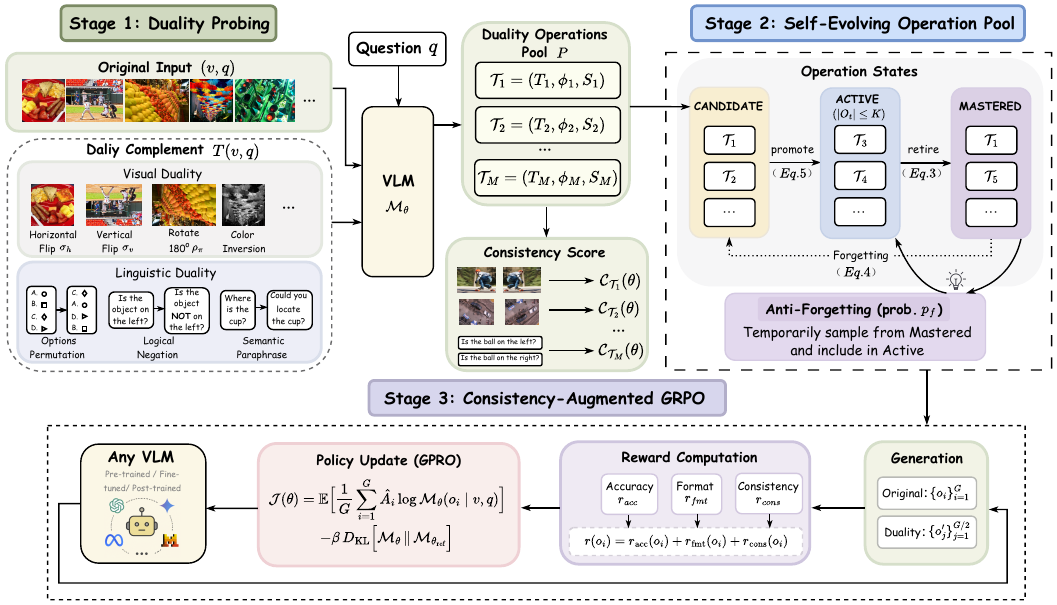}
\caption{Overview of the SAGE framework. Stage~1 probes $\mathcal{M}_\theta$ with duality operations to detect reasoning inconsistency. Stage~2 manages the operation pool through a promote--retire--recheck lifecycle. Stage~3 integrates duality consistency as an auxiliary reward within GRPO training.}
\label{fig:framework}
\end{figure}

\subsection{Stage 1: Duality Probing}
\label{sec:duality_probing}

Contemporary VLMs achieve high accuracy on spatial reasoning benchmarks, yet we hypothesize that much of this performance reflects reasoning inconsistency: the model has fitted distributional patterns in the training data rather than acquired genuine geometric reasoning. To formalize this, we introduce the notion of \emph{duality complement}. Let $\mathcal{M}_\theta$ denote a VLM parameterized by $\theta$ that, given a visual input $v \in \mathcal{V}$ and a query $q \in \mathcal{Q}$, produces a distribution over answers $\mathcal{M}_\theta(a \mid v, q)$. We write $\hat{a}_\theta(v,q)$ for the predicted answer and $a^*(v,q)$ for the ground truth.

\begin{definition}[Duality operation]
\label{def:duality_op}
A duality operation is a triple $\mathcal{T} = (T, \phi, \mathcal{S})$, where $T: \mathcal{V} \times \mathcal{Q} \to \mathcal{V} \times \mathcal{Q}$ is an input transformation, $\phi: \mathcal{A} \to \mathcal{A}$ is the induced answer mapping, and $\mathcal{S} \subseteq \mathcal{V} \times \mathcal{Q}$ is the applicability domain, such that
\begin{equation}
    \forall (v, q) \in \mathcal{S}: \quad a^*\bigl(T(v, q)\bigr) = \phi\bigl(a^*(v, q)\bigr).
    \label{eq:duality_axiom}
\end{equation}
\end{definition}

The key insight is that Eq.~\eqref{eq:duality_axiom} partitions the answer space into complementary pairs: if the correct answer to $(v, q)$ is $a$, the correct answer to the dual $T(v, q)$ is deterministically $\phi(a)$. A model that truly understands the underlying spatial structure must respect both the original and its complement.

\begin{definition}[Duality consistency]
The duality consistency of $\mathcal{M}_\theta$ with respect to $\mathcal{T} = (T, \phi, \mathcal{S})$ on distribution $\mathcal{D}$ is
\begin{equation}
    \mathcal{C}_\mathcal{T}(\theta) = \mathbb{E}_{(v,q) \sim \mathcal{D}\vert_{\mathcal{S}}} \bigl[\mathbf{1}\bigl[\phi\bigl(\hat{a}_\theta(v,q)\bigr) = \hat{a}_\theta\bigl(T(v,q)\bigr)\bigr]\bigr],
\end{equation}
where $\hat{a}_\theta(v,q) = \arg\max_a \mathcal{M}_\theta(a \mid v, q)$.
\end{definition}

A model with $\mathcal{C}_\mathcal{T}(\theta) \ll 1$ answers the original correctly but fails on the complement, which is the hallmark of \emph{pseudo-understanding}. For instance, a model that always predicts ``left'' for objects in the left visual field achieves high accuracy on unflipped images but $\mathcal{C}_{\text{flip}}(\theta) \approx 0$, since it cannot predict the complementary ``right'' after horizontal reflection.
We probe $\mathcal{M}_\theta$ with two families of operations to measure $\mathcal{C}_\mathcal{T}(\theta)$.
\paragraph{Visual duality.} Let $\mathcal{G}$ denote a group of geometric transformations acting on $\mathcal{V}$. For each $g \in \mathcal{G}$, we define $T_g(v, q) = (g \cdot v, q)$ with answer mapping $\phi_g$ determined by the geometry of $g$. We initialize the pool with several concrete instances:
\begin{itemize}
    \item \emph{Horizontal reflection} $\sigma_h$: $\phi_{\sigma_h}$ swaps left$\leftrightarrow$right in spatial answers.
    \item \emph{Vertical reflection} $\sigma_v$: $\phi_{\sigma_v}$ swaps top$\leftrightarrow$bottom.
    \item \emph{Rotation} $\rho_\pi$ (180$^\circ$): $\phi_{\rho_\pi} = \phi_{\sigma_h} \circ \phi_{\sigma_v}$, all spatial references invert.
    \item \emph{Appearance perturbations} (color inversion, grayscale): $\phi = \text{id}$ for non-color queries, testing whether reasoning is content-grounded rather than appearance-dependent.
\end{itemize}
More generally, any $g \in \mathcal{G}$ satisfying Eq.~\eqref{eq:duality_axiom} induces a valid duality operation. Since $\mathcal{G}$ forms a group under composition, the set of valid operations is closed: if $\mathcal{T}_1 = (T_1, \phi_1, \mathcal{S}_1)$ and $\mathcal{T}_2 = (T_2, \phi_2, \mathcal{S}_2)$ are both valid, then $\mathcal{T}_1 \circ \mathcal{T}_2 = (T_1 \circ T_2,\; \phi_1 \circ \phi_2,\; \mathcal{S}_2 \cap T_2^{-1}(\mathcal{S}_1))$ is also valid. This closure guarantees correctness of composed operations and guides candidate discovery, though only a bounded subset ($|\mathcal{P}| \leq M$) is maintained in the pool.

\paragraph{Linguistic duality.} These operate on $\mathcal{Q}$ while fixing $v$, and apply broadly across answer formats:
\begin{itemize}
    \item \emph{Option permutation}: For multiple-choice questions with $C$ options, a permutation $\pi \in S_C$ reorders the choices: $T_\pi(v, q) = (v, q_\pi)$ where $q_\pi$ places the original $\pi(i)$-th option at position $i$, with $\phi_\pi(a) = \pi^{-1}(a)$. A model that comprehends the content rather than memorizing positional biases must track the answer through the permutation.
    \item \emph{Logical negation}: $T_{\neg}(v, q) = (v, \neg q)$ inserts negation into the query. For binary or multiple-choice questions, $\phi_{\neg}$ maps to the complementary answer. This tests whether the model performs genuine logical inference or pattern-matches surface cues in the question.
    \item \emph{Semantic paraphrase}: $T_{\text{para}}(v, q) = (v, q')$ rephrases the query while preserving its meaning, with $\phi = \text{id}$ (relaxed to soft semantic similarity for free-form outputs).
\end{itemize}
As with visual duality, these instances are not exhaustive. The framework accommodates any linguistic transformation satisfying Eq.~\eqref{eq:duality_axiom}, including domain-specific reformulations that may be discovered during the self-evolving process.

\paragraph{Efficient batch probing.} To evaluate $\mathcal{C}_\mathcal{T}(\theta)$ at scale, we sample a probe set of $N$ examples and apply $\mathcal{T}$ to each. Both the original and dual inputs are processed in a single batched forward pass through $\mathcal{M}_\theta$, yielding consistency estimates with negligible overhead relative to standard inference.

\subsection{Stage 2: Self-Evolving Operation Pool}
\label{sec:self_evolving}

Not all duality operations are equally informative at every stage of training. An operation that the model already handles consistently wastes compute; one that is too difficult may produce uninformative reward signals. We introduce a self-evolving mechanism that maintains a dynamic pool $\mathcal{P} = \{\mathcal{T}_1, \ldots, \mathcal{T}_M\}$ with a lifecycle for each operation.

\paragraph{States and transitions.} Each $\mathcal{T}_i \in \mathcal{P}$ maintains a state $s_i \in \{\textsc{Candidate}, \textsc{Active}, \textsc{Mastered}\}$. Let $\mathcal{O}_t \subseteq \mathcal{P}$ denote the set of active operations at step $t$, constrained by $|\mathcal{O}_t| \leq K$ where $K$ is the maximum number of concurrently active operations. Every $E$ steps, the framework evaluates $\mathcal{C}_{\mathcal{T}_i}(\theta_t)$ on a probe set $\mathcal{D}_{\text{probe}}\vert_{\mathcal{S}_i}$ filtered by each operation's applicability domain, and updates states:
\begin{align}
    s_i: \textsc{Active} &\xrightarrow{\;\mathcal{C}_{\mathcal{T}_i} \geq \tau\;} \textsc{Mastered} \quad &\text{(retire: model has learned this invariance)} \label{eq:retire}\\
    s_i: \textsc{Mastered} &\xrightarrow{\;\mathcal{C}_{\mathcal{T}_i} < 0.8\tau\;} \textsc{Candidate} \quad &\text{(forgetting detected)} \label{eq:forget}\\
    s_i: \textsc{Candidate} &\xrightarrow{\;\text{highest } p_i,\; |\mathcal{O}_t| < K\;} \textsc{Active} \quad &\text{(promote to fill slot)} \label{eq:promote}
\end{align}
where $\tau$ is the mastery threshold and the priority score is $p_i = (1 - \mathcal{C}_{\mathcal{T}_i}) + \gamma \cdot \mathbf{1}[n_i < 3]$, favoring operations with low consistency and a novelty bonus $\gamma$ for under-explored ones ($n_i$ = number of past evaluations).

\paragraph{Anti-forgetting.} At each step, with probability $p_f$, a random mastered operation is temporarily included in $\mathcal{O}_t$, exposing it to continued training signal. At the next evaluation checkpoint ($t \bmod E = 0$), if its consistency has degraded below $0.8\tau$, Eq.~\eqref{eq:forget} demotes it back to Candidate for re-training.

\paragraph{Pool size and efficiency.} The constraint $|\mathcal{O}_t| \leq K$ bounds the per-step overhead to at most $K$ additional generation calls. The total pool size $|\mathcal{P}| \leq M$ is a fixed hyperparameter, so evaluation cost remains bounded regardless of the algebraic closure's theoretical size. In practice, $K = 3$ suffices.

\subsection{Stage 3: Consistency-Augmented GRPO}
\label{sec:duality_reward}

We integrate duality consistency into the GRPO framework \cite{Shao_2024_GRPO}. At each training step, given a sample $(v, q, a^*)$:

\paragraph{Generation.} The model generates $G$ completions $\{o_i\}_{i=1}^G \sim \mathcal{M}_\theta(\cdot \mid v, q)$. An active operation $\mathcal{T} \in \mathcal{O}_t$ with $(v,q) \in \mathcal{S}$ is sampled to produce $G/2$ dual completions $\{o'_j\}_{j=1}^{G/2} \sim \mathcal{M}_\theta(\cdot \mid T(v, q))$.

\paragraph{Reward.} The total reward for completion $o_i$ is:
\begin{equation}
    r(o_i) = \underbrace{r_\text{acc}(o_i)}_{\text{accuracy}} + \underbrace{r_\text{fmt}(o_i)}_{\text{format}} + \underbrace{r_\text{cons}(o_i)}_{\text{consistency}},
    \label{eq:total_reward}
\end{equation}
where the consistency reward measures whether the model's answer and its complement are logically coherent:
\begin{equation}
    r_\text{cons}(o_i) = \lambda \cdot \frac{1}{G/2} \sum_{j=1}^{G/2} \mathbf{1}\bigl[\phi\bigl(\text{ans}(o_i)\bigr) = \text{ans}(o'_j)\bigr],
    \label{eq:consistency_reward}
\end{equation}
with $\text{ans}(\cdot)$ extracting the answer from a completion and $\lambda$ controlling the consistency weight.

\paragraph{Policy update.} Following GRPO, we compute group-relative advantages normalized by the standard deviation:
\begin{equation}
    \hat{A}_i = \frac{r(o_i) - \bar{r}}{\sigma_r}, \quad \text{where } \bar{r} = \frac{1}{G}\sum_{k=1}^G r(o_k),\;\; \sigma_r = \text{std}\bigl(\{r(o_k)\}_{k=1}^G\bigr),
\end{equation}
and update $\theta$ by maximizing:
\begin{equation}
    \mathcal{J}(\theta) = \mathbb{E}\Bigl[\frac{1}{G}\sum_{i=1}^G \hat{A}_i \log \mathcal{M}_\theta(o_i \mid v, q)\Bigr] - \beta \, D_\text{KL}\bigl[\mathcal{M}_\theta \,\|\, \mathcal{M}_{\theta_\text{ref}}\bigr],
    \label{eq:grpo_loss}
\end{equation}
where $\mathcal{M}_{\theta_\text{ref}}$ is the frozen reference model and $\beta$ controls the KL penalty.
The full procedure is summarized in Appendix~\ref{pseudocode}.

\section{Experiments}
\label{sec:experiments}

\subsection{Experimental Setup}
\label{main_setup}

\paragraph{Datasets.} We evaluate on six video understanding benchmarks: MVBench \cite{Li_2024_MVBench, Shahroudy_2016_NTU}, TempCompass \cite{Liu_2024_TempCompass}, VideoMME \cite{Fu_2025_Video-MME}, VideoMMMU \cite{Hu_2025_Video-MMMU}, VSI-Bench \cite{Yang_2025_VSI-Bench}, and MMVU \cite{Zhao_2025_MMVU}; and seven spatial reasoning benchmarks: CV-Bench \cite{Tong_2024_CV-Bench}, SPAR-Bench \cite{Zhang_2026_SPAR-Bench}, ViewSpatial-Bench \cite{Li_2025_ViewSpatial-Bench}, MMSI-Bench \cite{Yang_2026_MMSI-Bench}, MindCube \cite{Yin_2025_MindCube}, OST-Bench \cite{Lin_2026_OST-Bench}, and OmniSpatial \cite{Jia_2026_OmniSpatial}.

\paragraph{Baselines.} 
We compare against a broad range of state-of-the-art VLMs, including proprietary models such as GPT-4o \cite{Feng_2025_Video-R1} and Gemini-2.0-Flash \cite{Gemini_2024_Gemini1.5}, as well as open-source models including InternVL-2.5 \cite{Chen_2025_InternVL-2.5}, LLaMA-VID \cite{Feng_2025_Video-R1}, VideoLLaMA2 \cite{Feng_2025_Video-R1}, LongVA-7B \cite{Feng_2025_Video-R1}, VILA-1.5-40B \cite{Feng_2025_Video-R1}, Video-UTR-7B \cite{Feng_2025_Video-R1}, LLaVA-OneVision-7B \cite{Feng_2025_Video-R1}, Kangaroo-8B \cite{Feng_2025_Video-R1}, Video-R1-7B \cite{Feng_2025_Video-R1}, Video-RTS-7B \cite{Wang_2025_Video-RTS}, VideoRFT-7B \cite{Wang_2025_VideoRFT}, and ViSS-R1-7B \cite{Fang_2025_ViSS-R1}. 
We further compare with recent spatial reasoning methods, including SpaceR \cite{Ouyang_2025_SpaceR}, VILASR \cite{Wu_2026_VILASR}, Spatial-MLLM \cite{Wu_2025_Spatial-MLLM}, and SpatialLadder \cite{Li_2026_SpatialLadder}.

\paragraph{Training details.}
We use Qwen2.5-VL-7B \cite{Bai_2025_Qwen2.5-VL} as our base model.
The training set consists of two parts: 23K samples from Video-R1 \cite{Feng_2025_Video-R1} filtered for spatial reasoning, and a randomly selected 5K subset from SpatialLadder \cite{Li_2026_SpatialLadder}, totaling about 28K samples focused on duality consistency. We further follow the SFT initialization strategy of Video-R1 for model warm-up before GRPO training. Training uses 8 NVIDIA A100 GPUs with DeepSpeed ZeRO-3, a learning rate of $10^{-6}$ with a cosine schedule, a batch size of 1 per device, and 8 rollouts per prompt for GRPO-style optimization. The duality consistency weight is $\lambda = 0.3$, with $K = 3$ active operations, evaluation interval $E = 100$, and mastery threshold $\tau = 0.75$. The model is trained for one epoch. More implementation details are in Appendix~\ref{setup}.

\subsection{Main Results}

Table~\ref{tab:simplified_results} shows that SAGE consistently improves performance over both proprietary and open-source state-of-the-art models across most video understanding benchmarks, with particularly strong gains on spatial-temporal reasoning-related tasks such as VideoMME, TempCmps, and VSI-Bench. We note that not all evaluated benchmarks are designed to specifically assess spatial reasoning, as video understanding also involves temporal dynamics, object semantics, and high-level reasoning. Consequently, SAGE yields more pronounced improvements on benchmarks with stronger spatial-temporal reasoning demands.
We further observe that SFT alone may lead to performance degradation on several benchmarks, while SAGE consistently improves grounding and spatial consistency. Interestingly, combining SFT with SAGE does not always outperform SAGE alone. We hypothesize that SFT introduces a language-biased representation shift, which may partially conflict with the spatial duality constraints imposed by SAGE, leading to mild optimization interference during joint training. In contrast, SAGE alone provides a more direct inductive bias toward spatial consistency, resulting in more stable improvements on spatially sensitive tasks.

\begin{table}[t]
\centering
\caption{Performance comparison on video understanding benchmarks.}
\label{tab:simplified_results}
\setlength{\tabcolsep}{0.4mm}
\resizebox{\textwidth}{!}{%
\begin{tabular}{lcccccc}
\toprule
Model & VSI-Bench & VideoMMMU & MMVU & MVBench & TempCmps & VideoMME \\
\midrule
\rowcolor{l1color!30} 
\multicolumn{7}{c}{\it Proprietary Models} \\
GPT-4o \cite{Feng_2025_Video-R1} & 34.0 & 61.2 & 75.4 & - & - & 71.9 \\
Gemini-2.0-Flash \cite{Li_2026_SpatialLadder, Zhao_2025_MMVU} & 45.4 & - & 66.5 & - & - & - \\
\rowcolor{l2color!30} 
\multicolumn{7}{c}{\it Open-source Models} \\
LLaMA-VID \cite{Feng_2025_Video-R1} & - & - & - & 41.9 & 45.6 & - \\
VideoLLaMA2 \cite{Feng_2025_Video-R1} & - & - & 44.8 & 54.6 & - & 47.9 \\
LongVA-7B \cite{Feng_2025_Video-R1} & 29.2 & 23.9 & - & - & 56.9 & 52.6 \\
VILA-1.5-40B \cite{Feng_2025_Video-R1} & 31.2 & 34.0 & - & - & - & 60.1 \\
Video-UTR-7B \cite{Feng_2025_Video-R1} & - & - & - & 58.8 & 59.7 & 52.6 \\
LLaVA-OneVision-7B \cite{Feng_2025_Video-R1} & 32.4 & 33.8 & 49.2 & 56.7 & - & 58.2 \\
Kangaroo-8B \cite{Feng_2025_Video-R1} & - & - & - & 61.1 & 62.5 & 56.0 \\
Video-R1-7B \cite{Feng_2025_Video-R1} & 34.6 & 49.8 & 64.2 & 62.7 & 72.6 & 57.4 \\
Video-RTS-7B \cite{Wang_2025_Video-RTS} & - & \textbf{52.7} & 66.4 & - & - & 63.0 \\
VideoRFT-7B \cite{Wang_2025_VideoRFT} & 36.8 & 51.1  & \textbf{68.5} & 62.1  & 73.7 & 59.8 \\
ViSS-R1-7B \cite{Fang_2025_ViSS-R1} & 37.3 & 51.7 & 66.1 & 65.6 & 75.3 & 60.5 \\
\midrule
\rowcolor{l3color!30}
\multicolumn{7}{c}{\it Our Method} \\
Qwen2.5-VL-7B & 32.5 & 46.1 & 59.2 & 59.1 & 70.4 & 50.6 \\
+SFT & 31.1 {\color[HTML]{FF0000}\scriptsize (-1.4)} & 43.4 {\color[HTML]{FF0000}\scriptsize (-2.7)} & 60.5 {\color[HTML]{008000}\scriptsize (+1.3)} & 59.5 {\color[HTML]{008000}\scriptsize (+0.4)} & 67.6 {\color[HTML]{FF0000}\scriptsize (-2.8)} & 51.7 {\color[HTML]{008000}\scriptsize (+1.1)} \\
+SAGE & \textbf{37.7} {\color[HTML]{008000}\scriptsize (+5.2)} & 50.6 {\color[HTML]{008000}\scriptsize (+4.5)} & 65.4 {\color[HTML]{008000}\scriptsize (+6.2)} & 66.2 {\color[HTML]{008000}\scriptsize (+7.1)} & \textbf{78.4} {\color[HTML]{008000}\scriptsize (+8.0)} & \textbf{65.1} {\color[HTML]{008000}\scriptsize (+14.5)} \\
+SFT+SAGE & 36.7 {\color[HTML]{008000}\scriptsize (+4.2)} & 51.7 {\color[HTML]{008000}\scriptsize (+5.6)} & 65.3 {\color[HTML]{008000}\scriptsize (+6.1)} & \textbf{67.5} {\color[HTML]{008000}\scriptsize (+8.4)} & 73.0 {\color[HTML]{008000}\scriptsize (+2.6)} & 58.2 {\color[HTML]{008000}\scriptsize (+7.6)} \\
\bottomrule
\end{tabular}
}
\end{table}

\begin{figure}[t]
\centering
\includegraphics[width=\linewidth]{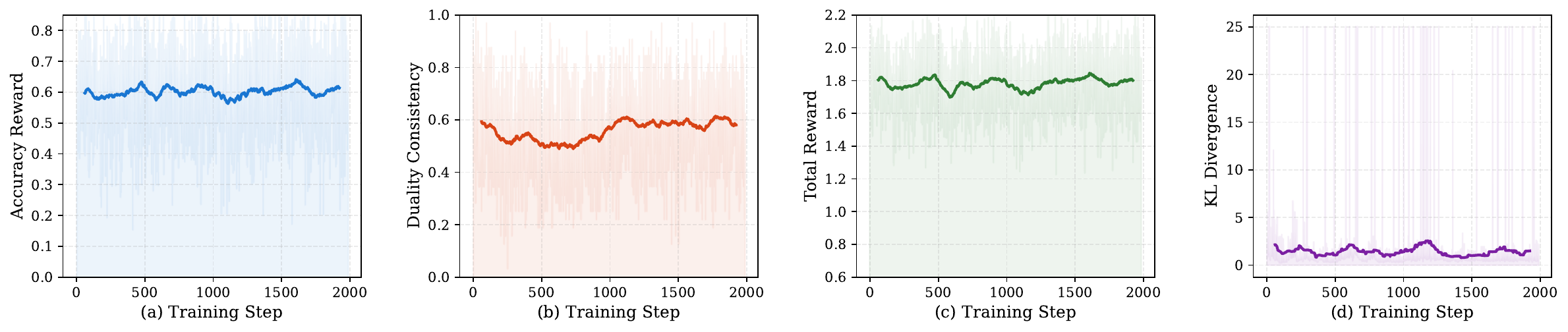}
\caption{Training dynamics of SAGE. (a) Accuracy reward remains stable throughout training. (b) Duality consistency shows a mild V-shaped trend, with a slight early drop followed by gradual recovery. (c) Total reward remains stable, indicating no degradation in overall performance. (d) KL divergence increases slightly but remains tightly bounded, suggesting controlled policy updates.}
\label{fig:training_curves}
\end{figure}

As shown in Fig.~\ref{fig:training_curves}, the duality consistency exhibits a mild initial drop, which can be attributed to the model being exposed to more diverse spatial patterns introduced by the training data. This temporarily perturbs the alignment between visual perception and linguistic representations learned during pretraining.
As optimization proceeds, the proposed self-evolving duality mechanism gradually adapts to these new patterns, leading to a steady recovery and eventual stabilization of consistency.
We further compare with Video-R1, whose average consistency remains around 50\%, whereas SAGE achieves higher and more stable consistency throughout training. This indicates that the proposed duality formulation not only preserves but also strengthens spatial alignment under distribution shift. Meanwhile, accuracy reward and total reward remain stable, and KL divergence stays tightly bounded, confirming that the optimization process is well-controlled.

\begin{table}[t]
\centering
\caption{Comparison of spatial reasoning performance. We report both the official results of Qwen2.5-VL and our reproduced baseline. Improvements are computed relative to the corresponding reproduced base model.}
\label{tab:spatial_comparison}
\setlength{\tabcolsep}{4.0pt}

\begin{tabular}{lccccccc}
\toprule
Model & CV-Bench & SPAR & ViewSp. & MMSI & MindCube & OST & OmniSp. \\
\midrule

GPT-4o \cite{Li_2026_SpatialLadder} 
& 75.4 & 36.4 & 32.6 & 30.3 & 38.8 & - & - \\

InternVL-2.5-8B \cite{Li_2026_SpatialLadder} 
& 76.5 & \textbf{36.3} & 43.2 & 25.7 & 18.7 & - & - \\

LLaVA-OV-7B \cite{Li_2026_SpatialLadder} 
& 58.3 & 31.2 & 27.5 & 24.5 & \textbf{47.3} & - & - \\

SpatialLadder-3B \cite{Li_2026_SpatialLadder} 
& 73.7 & 34.4 & \textbf{44.2} & 29.2 & 43.4 & - & - \\

\midrule

Qwen2.5-VL-7B 
& 74.2 & 21.8 & 37.4 & 26.4 & 29.5 & 38.1 & 23.7 \\

\rowcolor{blue!5}
\textbf{Qwen2.5-VL-7B + SAGE}
& \makecell{\textbf{77.2} \\ {\color[HTML]{008000}\scriptsize (+3.0)}} 
& \makecell{34.4 \\ {\color[HTML]{008000}\scriptsize (+12.6)}} 
& \makecell{44.1 \\ {\color[HTML]{008000}\scriptsize (+6.7)}} 
& \makecell{\textbf{32.3} \\ {\color[HTML]{008000}\scriptsize (+5.9)}} 
& \makecell{44.5 \\ {\color[HTML]{008000}\scriptsize (+15.0)}} 
& \makecell{\textbf{46.3} \\ {\color[HTML]{008000}\scriptsize (+8.2)}} 
& \makecell{\textbf{38.3} \\ {\color[HTML]{008000}\scriptsize (+14.6)}} \\

\bottomrule
\end{tabular}
\end{table}

As shown in Table~\ref{tab:spatial_comparison}, SAGE consistently improves the spatial reasoning ability of Qwen2.5-VL-7B across all benchmarks, achieving substantial gains on challenging tasks such as SPAR, MindCube, and OmniSp, which require fine-grained geometric understanding and cross-view consistency. In particular, the improvements on SPAR (+12.6) and MindCube (+15.0) indicate that SAGE is especially effective in enhancing relational and structured spatial reasoning. Compared with strong baselines such as InternVL-2.5-8B and SpatialLadder-3B, our method achieves overall superior or competitive performance across most benchmarks, despite being built on a relatively lightweight backbone. These results demonstrate that SAGE provides a strong and generalizable inductive bias for spatial reasoning, rather than overfitting to specific benchmark distributions.

\subsection{Ablation Study}

\begin{table}[t]
\centering
\caption{Ablation study of SAGE duality components. $\mathcal{T}_{vis}$ and $\mathcal{T}_{lin}$ represent visual and linguistic duality operations, respectively. All experiments are conducted on Qwen2.5-VL-7B.}
\label{tab:ablation_refined}
\small
\setlength{\tabcolsep}{1.0mm}
\resizebox{\textwidth}{!}{
\begin{tabular}{lcc|cccccc}
\toprule
Model & $\mathcal{T}_{vis}$ & $\mathcal{T}_{lin}$ & VSI-Bench & VideoMMMU & MMVU & MVBench & TempCmps & VideoMME \\
\midrule
Base (Pretrained) & -- & -- 
& 32.5 & 46.1 & 59.2 & 59.1 & 70.4 & 50.6 \\
+ SFT & -- & -- 
& 31.1 {\color[HTML]{C00000}\scriptsize (-1.4)}
& 43.4 {\color[HTML]{C00000}\scriptsize (-2.7)}
& 60.5 {\color[HTML]{008000}\scriptsize (+1.3)}
& 59.5 {\color[HTML]{008000}\scriptsize (+0.4)}
& 67.6 {\color[HTML]{C00000}\scriptsize (-2.8)}
& 51.7 {\color[HTML]{008000}\scriptsize (+1.1)} \\
\midrule
\rowcolor{gray!5}
+ GRPO (Visual only) & \checkmark & --
& 36.4 {\color[HTML]{008000}\scriptsize (+5.3)}
& 51.3 {\color[HTML]{008000}\scriptsize (+7.9)}
& 65.1 {\color[HTML]{008000}\scriptsize (+4.6)}
& 66.8 {\color[HTML]{008000}\scriptsize (+7.3)}
& 74.5 {\color[HTML]{008000}\scriptsize (+6.9)}
& 54.2 {\color[HTML]{008000}\scriptsize (+2.5)} \\
\rowcolor{gray!5}
+ GRPO (Linguistic only) & -- & \checkmark 
& 33.7 {\color[HTML]{008000}\scriptsize (+2.6)}
& 48.9 {\color[HTML]{008000}\scriptsize (+5.5)}
& 63.4 {\color[HTML]{008000}\scriptsize (+2.9)}
& 65.4 {\color[HTML]{008000}\scriptsize (+5.9)}
& 71.2 {\color[HTML]{008000}\scriptsize (+3.6)}
& 54.3 {\color[HTML]{008000}\scriptsize (+2.6)} \\
\midrule
\rowcolor{l4color!30}
\textbf{+ SFT + SAGE (Full)} & \checkmark & \checkmark 
& \textbf{36.7} {\color[HTML]{008000}\scriptsize (+5.6)}
& \textbf{51.7} {\color[HTML]{008000}\scriptsize (+8.3)}
& \textbf{65.3} {\color[HTML]{008000}\scriptsize (+4.8)}
& \textbf{67.5} {\color[HTML]{008000}\scriptsize (+8.0)}
& \textbf{73.0} {\color[HTML]{008000}\scriptsize (+5.4)}
& \textbf{58.2} {\color[HTML]{008000}\scriptsize (+6.5)} \\
\bottomrule
\end{tabular}}
\end{table}

Table~\ref{tab:ablation_refined} presents an ablation study on the proposed duality components of SAGE. Overall, both visual-only and linguistic-only GRPO variants consistently improve over the pretrained model, indicating that optimizing either modality contributes to better video understanding. Notably, the visual branch yields stronger gains on spatially grounded benchmarks such as VSI-Bench and VideoMMME, while the linguistic branch provides more balanced but relatively smaller improvements, suggesting that spatial reasoning benefits more from visual duality constraints. The full SFT+SAGE model achieves the best overall performance on most benchmarks, demonstrating the complementarity of supervised initialization and duality-driven optimization. However, the gains are not strictly additive, which suggests that SFT may introduce representation biases that partially overlap with or interfere with the duality-based objectives.

\subsection{Further Analysis}
\label{further}

\begin{figure}[t]
\centering
\includegraphics[width=\linewidth]{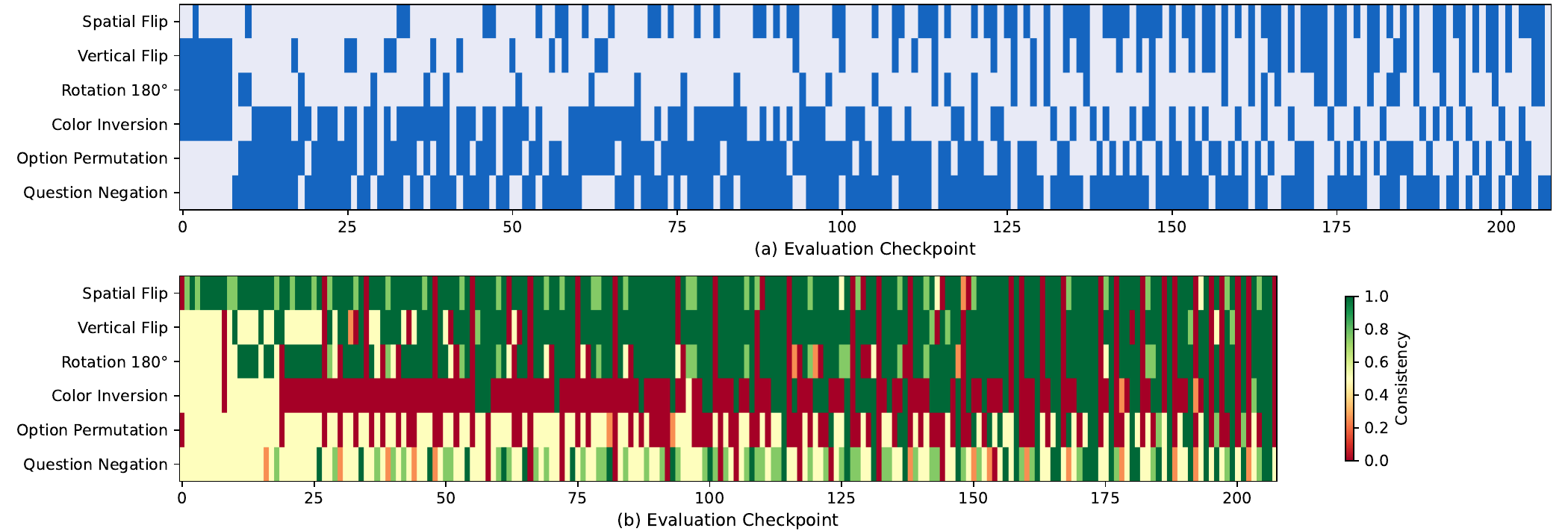}
\caption{Operation lifecycle during training. (a) Active operation states across checkpoints: geometric operations are quickly mastered and retired, color inversion is progressively promoted, and linguistic operations remain active. (b) Per-operation consistency, where mastered operations stay high and newly promoted ones gradually improve.}
\label{fig:lifecycle}
\end{figure}

\paragraph{Operation lifecycle dynamics.} Figure~\ref{fig:lifecycle} summarizes the evolution of the operation pool during training. We observe a clear progression from rapid mastery and retirement of easy operations to sustained activity in more challenging ones, particularly linguistic transformations. The anti-forgetting mechanism periodically reactivates previously mastered operations when their consistency drops, helping maintain the stability of the operation pool over time.

\paragraph{Impact of duality weight.} The duality weight $\lambda_\text{dual}$ controls the trade-off between duality consistency and primary task performance. We find that $\lambda_\text{dual} = 0.3$ provides a good balance, contributing a meaningful reward signal without overwhelming the accuracy reward. Values above 0.5 lead to consistency gains at the cost of accuracy degradation on non-spatial questions.

\section{Limitations and Discussion}
\label{sec:limitations}

Although the proposed duality consistency reward improves spatial alignment, it does not guarantee that both the original and dual inputs are simultaneously solved correctly. In particular, the method is most effective when at least one side (either $v,q$ or $T(v,q)$) produces a partially correct prediction, allowing the consistency signal to guide the model toward agreement with correct reasoning. However, in cases where both the original and dual completions are incorrect but mutually consistent, the model may still receive a positive consistency reward, resulting in a weak or uninformative training signal. This reflects a fundamental limitation of symmetry-based supervision, where consistency does not strictly imply correctness.

\section{Future Work}
\label{sec:future_work}

\paragraph{Reliable Dual Operation Generation.}
We observe that the self-evolving mechanism can automatically discover dozens to hundreds of candidate dual operations during training. However, some operations are not suitable for certain data distributions and may reduce sample quality. To improve operation validity, we further introduce Claude Opus 4.6\footnote{https://www.anthropic.com/news/claude-opus-4-6} as an additional review stage to filter and refine the generated operations, retaining only semantically valid and task-consistent transformations.
We will release the curated operation set, together with the corresponding datasets, model weights, and code, to facilitate reproducibility and future research.

\paragraph{Cross-Architecture Generalization.}
Due to time constraints, we currently report results mainly on the primary backbone studied in this paper. We are conducting additional experiments on Qwen2.5-VL-3B \cite{Bai_2025_Qwen2.5-VL} and Qwen3-8B-VL \cite{Bai_2025_Qwen3-VL}, and will release the corresponding results in future updates.

\section{Conclusion}
\label{sec:conclusion}

We presented SAGE, a self-evolving framework that strengthens spatial reasoning in VLMs by enforcing duality consistency under geometric and linguistic transformations. Our experiments across six video and seven spatial benchmarks show that SAGE improves accuracy on the majority of evaluations while using less than 37\% of the data required by prior GRPO methods, and that the self-evolving mechanism effectively identifies and corrects \emph{pseudo-understanding} where models fit surface patterns rather than acquire genuine spatial logic. The framework is model-agnostic and applicable as a lightweight post-training stage, offering a complementary signal to existing training pipelines. Future directions include extending the operation pool through automated discovery, incorporating 3D multi-view consistency, and applying the self-evolving mechanism to other forms of reasoning invariance beyond spatial duality.

\bibliographystyle{plainnat}
\bibliography{references}

\appendix

\newpage

\section{Pseudocode}
\label{pseudocode}

In this section, we present the pseudocode of SAGE, as shown in Algorithm~\ref{alg:self_evolving}.

\begin{algorithm}[h]
\caption{Self-Evolving Consistency Training}
\label{alg:self_evolving}
\begin{algorithmic}[1]
\STATE \textbf{Input:} VLM $\mathcal{M}_\theta$, reference $\mathcal{M}_{\theta_\text{ref}}$, dataset $\mathcal{D}$, pool $\mathcal{P}$, eval interval $E$, max active $K$, mastery threshold $\tau$
\STATE Initialize active set $\mathcal{O}_0 \leftarrow \{\sigma_h, \pi_\text{perm}\}$
\FOR{step $t = 1, 2, \ldots, T$}
    \STATE Sample $(v, q, a^*) \sim \mathcal{D}$
    \STATE $\{o_i\}_{i=1}^G \leftarrow \mathcal{M}_\theta(\cdot \mid v, q)$ \COMMENT{primary generation}
    \STATE With prob.\ $p_f$: sample $\mathcal{T}_m$ from Mastered and include in $\mathcal{O}_t$ \COMMENT{anti-forgetting}
    \STATE $\mathcal{O}_t^{(v,q)} \leftarrow \{\mathcal{T} \in \mathcal{O}_t : (v,q) \in \mathcal{S}\}$ \COMMENT{filter by applicability}
    \IF{$\mathcal{O}_t^{(v,q)} \neq \emptyset$}
        \STATE Select $\mathcal{T} = (T, \phi, \mathcal{S}) \leftarrow \text{WeightedSample}\bigl(\mathcal{O}_t^{(v,q)}\bigr)$
        \STATE $\{o'_j\}_{j=1}^{G/2} \leftarrow \mathcal{M}_\theta(\cdot \mid T(v, q))$ \COMMENT{dual generation}
        \STATE Compute $r(o_i)$ via Eqs.~\eqref{eq:total_reward}--\eqref{eq:consistency_reward} for all $i$
    \ELSE
        \STATE Set $r_\text{cons}(o_i) = 0$ for all $i$; compute $r(o_i)$ without consistency
    \ENDIF
    \STATE Update $\theta$ via Eq.~\eqref{eq:grpo_loss}
    \IF{$t \bmod E = 0$}
        \STATE Evaluate $\mathcal{C}_{\mathcal{T}_i}(\theta_t)$ on $\mathcal{D}_\text{probe}\vert_{\mathcal{S}_i}$ for all $\mathcal{T}_i \in \mathcal{P}$ \COMMENT{batch probing}
        \STATE Apply transitions Eqs.~\eqref{eq:retire}--\eqref{eq:promote}; update $\mathcal{O}_t$
    \ENDIF
\ENDFOR
\end{algorithmic}
\end{algorithm}

\section{Experimental Setup Details}
\label{setup}

\subsection{Benchmark Descriptions}

We evaluate SAGE on 13 benchmarks spanning video understanding and spatial reasoning. Below we describe each benchmark in detail, along with representative examples.

\subsubsection{Video Understanding Benchmarks}

\begin{itemize}
    \item \textbf{MVBench}~\cite{Li_2024_MVBench, Shahroudy_2016_NTU} (4,000 samples). A comprehensive multi-modal video understanding benchmark covering 20 temporal understanding tasks, including action sequence recognition, scene transition detection, object tracking, and counterfactual inference. All questions are multiple-choice. \emph{Example:} ``What is the action performed by the person in the video?'' with options describing different activities.

    \item \textbf{TempCompass}~\cite{Liu_2024_TempCompass} (7,540 samples). A temporal video description matching benchmark designed to test whether VLMs genuinely understand temporal dynamics or merely rely on static frame features. Questions probe fine-grained temporal perception such as speed, order, and direction of events.

    \item \textbf{VideoMME}~\cite{Fu_2025_Video-MME} (2,700 samples). A large-scale evaluation benchmark covering diverse video understanding scenarios across short ($<$2min), medium (4--15min), and long ($>$30min) videos, with questions spanning perception, cognition, and knowledge domains.

    \item \textbf{VideoMMMU}~\cite{Hu_2025_Video-MMMU} (900 samples). A challenging benchmark requiring domain-specific expert knowledge from video content, covering disciplines such as science, engineering, medicine, and humanities.

    \item \textbf{VSI-Bench}~\cite{Yang_2025_VSI-Bench} (5,130 samples). A visual spatial intelligence benchmark constructed from 288 indoor video scenes. It contains both multiple-choice questions (2,490) and numerical regression tasks (2,640) across 10 sub-categories: object counting, absolute distance estimation, relative distance comparison, object size estimation, room size estimation, relative direction (easy/medium/hard), appearance order, and route planning. Following the original benchmark protocol, we report the average score across all 10 sub-tasks, using accuracy for multiple-choice and mean relative accuracy (MRA) for regression tasks. \emph{Example:} ``If I am standing by the refrigerator and facing the table, is the dishwasher to my front-left, front-right, back-left, or back-right?''

    \item \textbf{MMVU}~\cite{Zhao_2025_MMVU} (625 samples). A multi-discipline video understanding benchmark with questions from domains including science, healthcare, engineering, and daily life, requiring both visual understanding and domain knowledge.
\end{itemize}

\begin{figure}[t]
\centering
\includegraphics[width=\linewidth]{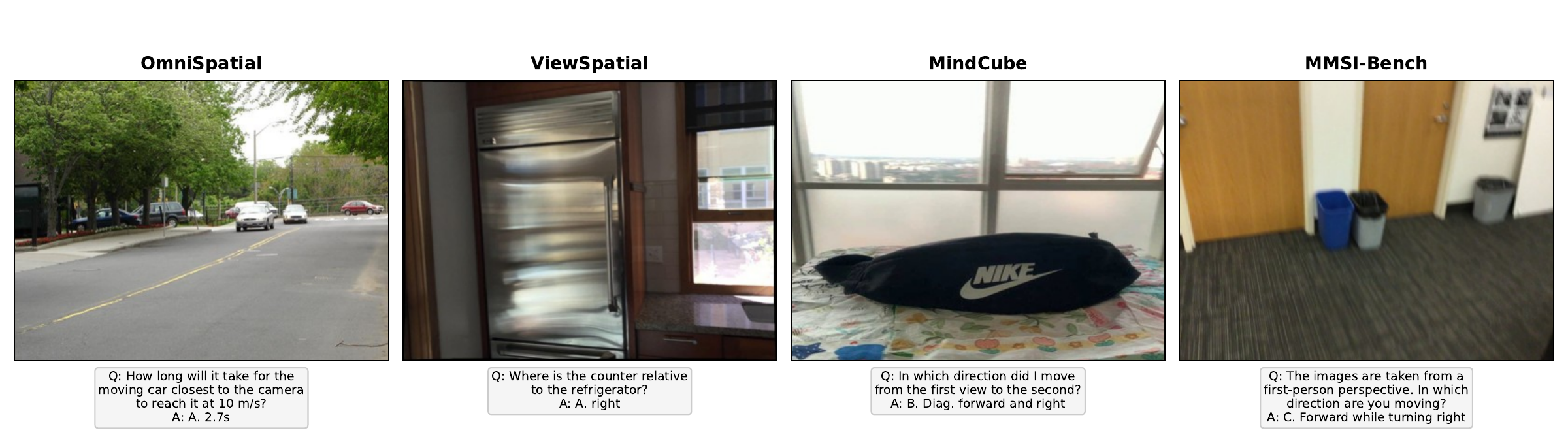}
\caption{Representative examples from four spatial reasoning benchmarks. Each example shows the input image, the spatial question, and the ground-truth answer. These questions require understanding of object distances (OmniSpatial), relative positions (ViewSpatial), egomotion and perspective changes (MindCube), and first-person movement direction (MMSI-Bench).}
\label{fig:benchmark_examples}
\end{figure}

\subsubsection{Spatial Reasoning Benchmarks}

\begin{itemize}
    \item \textbf{CV-Bench}~\cite{Tong_2024_CV-Bench} (2,638 samples). Transforms classic computer vision tasks into VQA format, with a 2D subset (1,438 samples) evaluating spatial relationships and object counting, and a 3D subset (1,200 samples) evaluating depth ordering and relative distance. Images are sourced from ADE20K, COCO, and OMNI3D with manually verified annotations.

    \item \textbf{SPAR-Bench}~\cite{Zhang_2026_SPAR-Bench} (7,207 samples). A large-scale 3D spatial perception and reasoning benchmark spanning 33 spatial tasks across single-view, multi-view, and video settings, sourced from 4,500 real-world scenes. Tasks include distance estimation, object-to-object spatial relations, spatial imagination, and route planning.

    \item \textbf{ViewSpatial-Bench}~\cite{Li_2025_ViewSpatial-Bench} (5,712 samples). Evaluates multi-perspective spatial localization by requiring models to reason about the relative positions of objects from different camera viewpoints. Questions ask about directional relationships (e.g., left, right, front, back) between objects in 3D scenes. \emph{Example:} ``Could you tell me the location of the counter in comparison to the refrigerator?'' with options: A.~right, B.~front-up, C.~back-left, D.~front.

    \item \textbf{MMSI-Bench}~\cite{Yang_2026_MMSI-Bench} (1,000 samples). A human-annotated benchmark for multi-image spatial intelligence, where each question involves two or more images from real-world scenes including autonomous driving, robotic manipulation, and indoor scanning. Covers six types of positional relationships, two types of attribute reasoning, two types of motion reasoning, and multi-step reasoning. \emph{Example:} ``The images are taken continuously from a first-person perspective. In which direction are you moving?''

    \item \textbf{MindCube}~\cite{Yin_2025_MindCube} (21,154 samples). Evaluates spatial mental modeling from limited visual views with questions testing cognitive mapping, perspective-taking, and mental simulation across 3,268 images. The model must form a coherent spatial mental model from partial observations to answer questions about unseen perspectives. \emph{Example:} ``Based on these two views showing the same scene: in which direction did I move from the first view to the second view?''

    \item \textbf{OST-Bench}~\cite{Lin_2026_OST-Bench} (10,165 QA pairs across 1.4K scenes; we evaluate on 3,194 single-turn questions). Tests online spatio-temporal scene understanding by requiring models to track object existence, positions, and state changes across sequential observations in 3D environments. Questions probe whether models can maintain a consistent internal representation of the evolving scene.

    \item \textbf{OmniSpatial}~\cite{Jia_2026_OmniSpatial} (8,400+ QA pairs in total; we evaluate on the 1,533-sample test split). A comprehensive spatial reasoning benchmark with 50 fine-grained subcategories organized into four major categories: dynamic reasoning (e.g., motion analysis, collision prediction), complex spatial logic (e.g., multi-step reasoning), spatial interaction (e.g., navigation, traffic analysis), and perspective-taking (e.g., viewpoint changes). The test split is disjoint from the training data used in our experiments. \emph{Example:} ``How long will it take for the moving car closest to the camera to reach it if it is going at 10~m/s?''
\end{itemize}

\subsection{Baselines}

We compare against a broad set of baselines covering proprietary models (GPT-4o~\cite{Feng_2025_Video-R1}, Gemini-2.0-Flash~\cite{Gemini_2024_Gemini1.5}), open-source general VLMs (InternVL-2.5~\cite{Chen_2025_InternVL-2.5}, LLaMA-VID, VideoLLaMA2, LongVA-7B, VILA-1.5-40B, Kangaroo-8B), reasoning-focused video models (Video-R1-7B~\cite{Feng_2025_Video-R1}, Video-RTS-7B~\cite{Wang_2025_Video-RTS}, VideoRFT-7B~\cite{Wang_2025_VideoRFT}, ViSS-R1-7B~\cite{Fang_2025_ViSS-R1}), and spatial reasoning methods (SpaceR~\cite{Ouyang_2025_SpaceR}, VILASR~\cite{Wu_2026_VILASR}, Spatial-MLLM~\cite{Wu_2025_Spatial-MLLM}, SpatialLadder~\cite{Li_2026_SpatialLadder}). Baseline numbers are taken from their respective papers where available and reproduced under identical settings otherwise.

\subsection{Training Configuration}

\paragraph{Model initialization.} We use Qwen2.5-VL-7B-Instruct~\cite{Bai_2025_Qwen2.5-VL} as the backbone. Following Video-R1~\cite{Feng_2025_Video-R1}, we first perform Supervised Fine-Tuning (SFT) on a subset of the Video-R1-COT dataset to establish chain-of-thought reasoning capabilities, then conduct GRPO training with our duality consistency reward.

\paragraph{Training data.} The training set consists of two components: (1)~approximately 23K samples from the Video-R1 training set filtered for spatial reasoning relevance using keyword matching on spatial terms (e.g., left, right, above, below, direction, position), and (2)~approximately 5K samples from the SpatialLadder-26k training set~\cite{Li_2026_SpatialLadder}. The combined corpus of about 28K samples emphasizes spatial reasoning and geometric understanding, representing less than 37\% of the original Video-R1-62k training set.

\paragraph{Hyperparameters.} Training is conducted on 8 NVIDIA A100-80GB GPUs using DeepSpeed ZeRO-3 for memory-efficient distributed training. We use a learning rate of $10^{-6}$ with cosine annealing, weight decay of 0.01, and gradient clipping at norm 5. The per-device batch size is 1 with 8 generations (rollouts) per prompt for GRPO optimization. The KL penalty coefficient is $\beta = 0.04$. For the self-evolving duality mechanism, we set the consistency weight $\lambda = 0.3$, maximum active operations $K = 3$, evaluation interval $E = 100$ steps, mastery threshold $\tau = 0.75$, and anti-forgetting spot-check probability $p_f = 0.2$. The maximum prompt length is 16,384 tokens and maximum completion length is 768 tokens. We use Flash Attention 2 and bf16 precision throughout. Training is conducted for one epoch.

\paragraph{Evaluation protocol.} All models are evaluated using greedy decoding (temperature $\approx 0$, top-p = 0.001) with a maximum of 1,024 output tokens. For video benchmarks, we sample 16 frames per video.

\section{Mathematical Analysis of Duality-Augmented Reasoning}
\label{app:proof}

We analyze how duality constraints improve reasoning quality. Our results show that duality-inconsistent models are provably suboptimal and that duality constraints reduce the effective hypothesis space.

\subsection{Setup}

We analyze the deterministic prediction rule induced by the VLM: $f_\theta(x,q) = \arg\max_a \pi_\theta(a \mid x, q)$. Let $\mathcal{A}$ be a finite answer space with $|\mathcal{A}| = C$, and let $a^*(x,q)$ denote the deterministic ground-truth answer. All probabilities are taken over $(x,q) \sim \mathcal{D}$. A \emph{duality operation} $(\mathcal{T}, \phi)$ satisfies $a^*(\mathcal{T}(x,q)) = \phi(a^*(x,q))$ for all $(x,q)$, where $\phi$ is a bijection. The \emph{duality consistency} is $\mathcal{C}_\mathcal{T}(\theta) = \Pr[f_\theta(\mathcal{T}(x,q)) = \phi(f_\theta(x,q))]$. We define the 0-1 risks $R(\theta) = \Pr[f_\theta(x,q) \neq a^*]$, $R^\mathcal{T}(\theta) = \Pr[f_\theta(\mathcal{T}(x,q)) \neq \phi(a^*)]$, and the augmented risk $R_\mathcal{T}(\theta) = \frac{1}{2}(R(\theta) + R^\mathcal{T}(\theta))$.

\subsection{Suboptimality of Inconsistent Models}

\begin{theorem}
\label{thm:suboptimality}
Let $(\mathcal{T}, \phi)$ be a duality operation with $\phi$ bijective. Then:
\begin{enumerate}
    \item[(i)] $R_\mathcal{T}(\theta) \geq \frac{1 - \mathcal{C}_\mathcal{T}(\theta)}{2}$.
    \item[(ii)] If $R(\theta) = 0$, then $R^\mathcal{T}(\theta) \geq 1 - \mathcal{C}_\mathcal{T}(\theta)$.
\end{enumerate}
\end{theorem}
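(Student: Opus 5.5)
The plan is a pointwise event-inclusion argument followed by a union bound. Fix $(x,q)$ and write $a = a^*(x,q)$, $\hat a = f_\theta(x,q)$, $\hat a' = f_\theta(\mathcal{T}(x,q))$. Let $E_1 = \{\hat a \neq a\}$ be the event that the original is mispredicted, $E_2 = \{\hat a' \neq \phi(a)\}$ the event that the dual is mispredicted, and $I = \{\hat a' \neq \phi(\hat a)\}$ the inconsistency event. By definition, $\Pr[E_1] = R(\theta)$, $\Pr[E_2] = R^{\mathcal{T}}(\theta)$ (using the duality axiom $a^*(\mathcal{T}(x,q)) = \phi(a)$), and $\Pr[I] = 1 - \mathcal{C}_{\mathcal{T}}(\theta)$.

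The key step is the inclusion $I \subseteq E_1 \cup E_2$. Equivalently, on the complement of $E_1 \cup E_2$ we have $\hat a = a$ and $\hat a' = \phi(a)$. Hence $\hat a' = \phi(a) = \phi(\hat a)$, so the pair is consistent. This step only uses that $\phi$ is a function; bijectivity is not needed here. Taking probabilities and applying the union bound gives
\begin{equation*}
1 - \mathcal{C}_{\mathcal{T}}(\theta) \le \Pr[E_1] + \Pr[E_2] = R(\theta) + R^{\mathcal{T}}(\theta) = 2R_{\mathcal{T}}(\theta),
\end{equation*}
which is (i).

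For (ii), I would use the same inclusion, but with $\Pr[E_1] = R(\theta) = 0$. Then $I \subseteq E_2$ up to a null set, so $1 - \mathcal{C}_{\mathcal{T}}(\theta) = \Pr[I] \le \Pr[E_2] = R^{\mathcal{T}}(\theta)$. One can sharpen this: on the almost-sure event $\hat a = a$ we have $I = E_2$ exactly, so (ii) actually holds with equality.

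I expect no real obstacle; the only care needed is bookkeeping. First, probabilities must be taken over the same distribution $\mathcal{D}$ on which the duality axiom holds (all $(x,q)$ in the appendix setup, so no applicability-domain restriction arises). Second, $R^{\mathcal{T}}$ must be read with target $\phi(a^*(x,q))$, which equals the true dual label by the axiom. Bijectivity of $\phi$ would only be needed for a converse direction (e.g., recovering $\hat a$ from $\hat a'$), so I would remark that it is not used for these two bounds.
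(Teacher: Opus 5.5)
Your proof is correct and is essentially the paper's argument: the paper writes the pointwise inequality $\mathbf{1}[\hat a \neq a^*] + \mathbf{1}[\hat a' \neq \phi(a^*)] \geq \mathbf{1}[\hat a' \neq \phi(\hat a)]$, justified by a short case analysis, and takes expectations, which is exactly your inclusion $I \subseteq E_1 \cup E_2$ followed by the union bound. Your side remarks also hold: the paper uses bijectivity only for an observation about consistent samples that the bound never needs, and when $R(\theta)=0$ one does get $R^{\mathcal{T}}(\theta) = 1 - \mathcal{C}_{\mathcal{T}}(\theta)$ with equality.
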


\begin{proof}
Partition the sample space into consistent samples $S_\text{cons} = \{(x,q) : f_\theta(\mathcal{T}(x,q)) = \phi(f_\theta(x,q))\}$ and inconsistent samples $S_\text{incons}$ (the complement).

\emph{Key observation.} For $(x,q) \in S_\text{cons}$, since $\phi$ is bijective, the model either gets both the original and dual correct, or both wrong. For $(x,q) \in S_\text{incons}$, at least one of the two must be wrong: if $a = a^*$ then $a' \neq \phi(a) = \phi(a^*)$, so the dual is wrong; if $a' = \phi(a^*)$ then $a' \neq \phi(a)$ forces $a \neq a^*$. Therefore:
\begin{equation}
    \mathbf{1}[a \neq a^*] + \mathbf{1}[a' \neq \phi(a^*)] \geq \mathbf{1}[(x,q) \in S_\text{incons}].
\end{equation}
Taking expectations: $R(\theta) + R^\mathcal{T}(\theta) \geq 1 - \mathcal{C}_\mathcal{T}(\theta)$, giving (i). Part (ii) follows directly: if $R(\theta) = 0$ then every inconsistent sample contributes to $R^\mathcal{T}(\theta)$.
\end{proof}

\subsection{Hypothesis Space Reduction}

\begin{proposition}
\label{thm:capacity}
On a finite input space of size $N$, the unconstrained hypothesis class has $C^N$ elements. Under $M$ duality operations with non-overlapping orbits of total size $N_\mathcal{T}$, the duality-feasible class satisfies $|\mathcal{H}_\mathcal{T}| \leq C^{N - N_\mathcal{T}/2}$ and $\text{VCdim}(\mathcal{H}_\mathcal{T}) \leq N - N_\mathcal{T}/2$.
\end{proposition}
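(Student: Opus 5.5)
The idea is to count by orbits. The set $\mathcal{H}_\mathcal{T}$ consists of the functions $f:\mathcal{X}\to\mathcal{A}$ (with $|\mathcal{X}|=N$, $|\mathcal{A}|=C$) that satisfy every duality constraint $f(\mathcal{T}_m(z)) = \phi_m(f(z))$. I will read ``non-overlapping orbits of total size $N_\mathcal{T}$'' as follows. Each operation $\mathcal{T}_m$ generates a partition of some subset of $\mathcal{X}$ into orbits $\{z, \mathcal{T}_m(z), \mathcal{T}_m^2(z),\dots\}$. These orbits, taken over all $m$, are pairwise disjoint, every orbit has size at least $2$, and their sizes sum to $N_\mathcal{T}$. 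The remaining $N - N_\mathcal{T}$ inputs are unconstrained.

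The first step is to show that on each orbit $O=\{z_0,z_1,\dots,z_{k-1}\}$, with $z_{j+1}=\mathcal{T}_m(z_j)$, a feasible $f$ is determined by $f(z_0)$. Apply the constraint along the chain: $f(z_j)=\phi_m^{\,j}(f(z_0))$. So the restriction $f|_O$ takes at most $C$ values, where an unconstrained function would have $C^{k}$. Since $\phi_m$ is a bijection, this is well defined. Closing the orbit may rule out some choices, but we only need an upper bound, so that does not matter.

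The second step multiplies over the disjoint pieces. The constrained pieces contribute at most $C^{\#\text{orbits}}$, and the free inputs contribute $C^{N-N_\mathcal{T}}$. Each orbit has size $k\ge 2$, so it contributes one free coordinate for $k$ points, and $1 \le k/2$. Hence $\#\text{orbits}\le N_\mathcal{T}/2$. This gives $|\mathcal{H}_\mathcal{T}|\le C^{N-N_\mathcal{T}+N_\mathcal{T}/2}=C^{N-N_\mathcal{T}/2}$. The bound is tight for involutions such as reflections, which have orbits of size exactly $2$.

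For the VC bound I will use the injection argument. Suppose $\mathcal{H}_\mathcal{T}$ shatters a set $S$, read in the multiclass sense via some binary reduction, or directly for $C=2$. Then $S$ can contain at most one point from each orbit. Indeed, if two points $z,z'$ of $S$ lie in the same orbit, then $f(z')$ is a fixed bijective image of $f(z)$. That fixes the relationship between their labels and prevents all labelings of $\{z,z'\}$. So $|S|\le (N-N_\mathcal{T}) + \#\text{orbits}\le N-N_\mathcal{T}/2$. Alternatively, for $C\ge 2$ the crude bound $\text{VCdim}\le\log_2|\mathcal{H}_\mathcal{T}|$ combined with the cardinality bound gives the same result when $C=2$.

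I expect the main obstacle to be the VC step: making ``shattering'' precise for a $C$-ary class and checking that the same-orbit argument really forbids shattering under the chosen convention. I would fix it by working with the binary case $C=2$, or with Natarajan dimension. In either case the orbit-determinacy from the first step yields the bound directly.
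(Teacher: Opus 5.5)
Your proposal is correct and uses the same degrees-of-freedom count as the paper's proof. Each constrained orbit (a coupled pair, in the paper's wording) is determined by a single value, which leaves at most $N - N_\mathcal{T}/2$ free coordinates. Your version is simply more careful: it handles orbits of any size $k \ge 2$, states the needed assumption that no orbit has size one, and supplies the at-most-one-point-per-orbit shattering argument (which also works for Natarajan dimension when $C>2$) that the paper leaves implicit for the VC bound.
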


\begin{proof}
Each duality constraint $f(\mathcal{T}(x,q)) = \phi(f(x,q))$ couples a pair of inputs, eliminating at least one degree of freedom per pair. This reduces the number of free parameters from $N$ to at most $N - N_\mathcal{T}/2$.
\end{proof}

The standard VC generalization bound then yields a tighter excess risk bound, formalizing the intuition that duality constraints improve generalization by restricting the hypothesis space.

\subsection{Connection to Group Equivariance}

The duality operations form generators of geometric symmetry groups: horizontal and vertical flip generate $V_4 \cong \mathbb{Z}_2 \times \mathbb{Z}_2$; adding rotation generates $D_4$; option permutations correspond to $S_C$. For a $G$-equivariant model under a free group action, the VC dimension is reduced by up to a factor of $|G|$. Our self-evolving mechanism incrementally discovers and enforces equivariance with respect to a growing subgroup.

\subsection{Convergence}

Under simplifying assumptions, we can characterize the convergence behavior of the self-evolving mechanism.

\begin{proposition}
\label{thm:convergence}
Assume that (i) active operations improve consistency at expected rate $\epsilon_\text{step}$ per step, and (ii) inactive operations degrade at rate at most $\eta$, with $K\epsilon_\text{step} > (M-K)\eta$. Then the expected potential $\mathbb{E}[\Phi_t]$, where $\Phi_t = \sum_i \max(0, \tau - \mathcal{C}_{\mathcal{T}_i}(\theta_t))$, decreases at rate $K\epsilon_\text{step} - (M-K)\eta > 0$ per step, reaching zero within at most $T^* = \frac{M\tau}{K\epsilon_\text{step} - (M-K)\eta}$ steps. The anti-forgetting spot-check mechanism helps bound $\eta$ in practice by periodically re-activating degraded operations.
\end{proposition}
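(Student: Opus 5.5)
The argument is a Lyapunov-style drift bound on the potential $\Phi_t$, followed by a telescoping sum and a stopping-time bound. First I bound the initial value: every consistency lies in $[0,1]$, so each summand $\max(0,\tau-\mathcal{C}_{\mathcal{T}_i}(\theta_0))$ is at most $\tau$. Hence $\Phi_0\le M\tau$.

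\emph{Step 1 (per-step drift).} Fix a step $t$ and condition on $\theta_t$ and on the pool states. I write $\Phi_{t+1}-\Phi_t$ as a sum over operations and split it by the partition $\mathcal{O}_t$ versus $\mathcal{P}\setminus\mathcal{O}_t$.
\begin{itemize}
\item For an active $\mathcal{T}_i$ with $\mathcal{C}_{\mathcal{T}_i}<\tau$, assumption (i) makes the expected consistency rise by $\epsilon_\text{step}$. Since $x\mapsto\max(0,\tau-x)$ is nonincreasing and $1$-Lipschitz, the summand falls by $\epsilon_\text{step}$ in expectation, at least until it hits zero.
\item For an inactive operation, assumption (ii) and the same Lipschitz property show the summand grows by at most $\eta$.
\end{itemize}
There are at most $M-K$ inactive operations. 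Combining the two cases gives the drift inequality
\[
\mathbb{E}[\Phi_{t+1}-\Phi_t\mid\mathcal{F}_t]\le -\bigl(K\epsilon_\text{step}-(M-K)\eta\bigr)\quad\text{whenever }\Phi_t>0 .
\]

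\emph{Step 2 (the drift needs a full active set).} The inequality above requires $K$ active operations that are still below threshold, and this is the main obstacle. Two failure modes must be handled.
\begin{itemize}
\item An active operation may already be at or above $\tau$. The retire rule Eq.~\eqref{eq:retire} removes it only at evaluation checkpoints $t\bmod E=0$.
\item Fewer than $K$ unmastered operations may remain.
\end{itemize}
I would first read assumption (i) as applying to whatever is active, with the slot-refilling of Eq.~\eqref{eq:promote} keeping $|\mathcal{O}_t|=K$ as long as $\Phi_t>0$. Under this reading the clipping at zero only makes the decrease larger, which is the favorable direction for an upper bound. If fewer than $K$ operations remain below $\tau$, I would argue directly that their summands are each driven to zero at rate $\epsilon_\text{step}$, which is at least as fast. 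The between-checkpoint lag I would absorb into the assumption, stating explicitly that it is idealized as in the proposition's ``simplifying assumptions.'' Anti-forgetting spot-checks only add active training signal, so they can be folded into a smaller effective $\eta$, matching the final remark of the proposition.

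\emph{Step 3 (telescoping and hitting time).} Let $\delta=K\epsilon_\text{step}-(M-K)\eta>0$ and let $T_0$ be the first time $\Phi_t=0$. I take expectations and telescope Step 1 along the stopped process $\Phi_{t\wedge T_0}+\delta\,(t\wedge T_0)$, which is a supermartingale. This gives $\mathbb{E}[\Phi_{t\wedge T_0}]\le \Phi_0-\delta\,\mathbb{E}[t\wedge T_0]$. Because $\Phi\ge0$, it follows that $\mathbb{E}[T_0]\le \Phi_0/\delta\le M\tau/\delta=T^*$ by optional stopping. In the deterministic-rate reading, the same computation gives $\mathbb{E}[\Phi_t]\le M\tau-\delta t$ directly, so the potential reaches zero by $t=T^*$.
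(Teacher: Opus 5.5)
The paper gives no proof of this proposition; it is stated only ``under simplifying assumptions''. The argument it implicitly rests on is your skeleton: $\Phi_0\le M\tau$, a per-step decrease of $\delta=K\epsilon_{\text{step}}-(M-K)\eta$, hence $T^*=M\tau/\delta$. Your closing ``deterministic-rate reading'' is exactly that argument.

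The stochastic derivation you build around it does not go through. In Step~1 you pass from ``the expected consistency rises by $\epsilon_{\text{step}}$'' to ``the summand $h(\mathcal{C})=\max(0,\tau-\mathcal{C})$ falls by $\epsilon_{\text{step}}$ in expectation'', using monotonicity and the Lipschitz property. Those properties only transfer increments pathwise. Since $h$ is convex, Jensen gives $\mathbb{E}[h(\mathcal{C}_{t+1})\mid\mathcal{F}_t]\ge h(\mathcal{C}_t+\epsilon_{\text{step}})$, which points the wrong way. For example, take $\tau=0.75$, $\mathcal{C}_t=0.65$, and an increment of $+0.3$ or $-0.2$ with equal probability. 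The expected consistency rises by $0.05$, yet the expected summand rises from $0.10$ to $0.15$. For inactive operations the Lipschitz bound gives $h(\mathcal{C}_{t+1})-h(\mathcal{C}_t)\le(\mathcal{C}_t-\mathcal{C}_{t+1})^+$. So you need a bound on the expected \emph{downward part} of the change, not on its mean. Assumptions (i)--(ii) therefore have to be imposed pathwise, or directly on the hinge terms.

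The boundary handling in Step~2 also fails, for three reasons.
\begin{itemize}
\item The inequality $\mathbb{E}[\Phi_{t+1}-\Phi_t\mid\mathcal{F}_t]\le-\delta$ ``whenever $\Phi_t>0$'' cannot hold on $\{0<\Phi_t<\delta\}$. There $\Phi_{t+1}\ge0$ forces $\Phi_{t+1}-\Phi_t\ge-\Phi_t>-\delta$.
\item Your remark that clipping ``only makes the decrease larger'' is backwards. For active operations, clipping caps each summand's decrease at its current value, and that is precisely what breaks the bound near zero.
\item When only $k<K$ operations remain below $\tau$, the active contribution is at most $k\epsilon_{\text{step}}$. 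Meanwhile, mastered inactive operations sitting at the threshold can each add $\eta$. The drift of $\Phi$ need not be negative at all, so ``at least as fast'' is unjustified.
\end{itemize}

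What your optional-stopping argument legitimately gives is $\mathbb{E}[\sigma]\le M\tau/\delta$, where $\sigma$ is the first time the idealized drift hypothesis stops applying. That is an expected-hitting-time bound, not the statement's ``reaching zero within $T^*$ steps''. To obtain the statement itself, take as the idealized hypothesis that $\Phi_{t+1}\le\max(0,\Phi_t-\delta)$ pathwise while $\Phi_t>0$. Then $\Phi_t\le\max(0,M\tau-\delta t)$ and the claim is immediate. You should present this as what the ``simplifying assumptions'' mean, rather than as something derived from (i)--(ii).
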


\section{Copyrights}

We acknowledge that the image used in Figure~\ref{fig:intro} is credited to Kamran Aydinov on Magnific. The figure is used solely for academic illustration purposes.

\section{Case Study: Duality Consistency in Practice}

\paragraph{Visual operations.} Figure~\ref{fig:case_visual} presents failure cases of Video-R1-7B under four visual duality operations. All examples are from OmniSpatial, where the model answers correctly on the original image but fails after transformation. Row~1 asks for a person's speed; the model correctly answers 0.43\,m/s but outputs 5.7\,m/s after horizontal flip and 24.7\,m/s after 180\textdegree{} rotation. Row~2 shows that vertical flip changes a speed estimate from 3.5\,m/s to 0.4\,m/s. Row~3 demonstrates that color inversion is equally disruptive: the model estimates a car's speed as 2.33\,m/s on the original but 1.28\,m/s after both horizontal flip and color inversion. Row~4 is particularly striking: the question references a ``big black bear,'' but after color inversion the bear appears white against a purple forest, and the model fails to produce any answer.

\paragraph{Linguistic operations.} Figure~\ref{fig:case_linguistic} shows failures under two linguistic operations. Under \emph{option permutation}, the order of multiple-choice options is reversed while keeping the question and image unchanged. The model selects ``B'' in both cases, but ``B'' maps to ``right'' (correct) in the original and ``left'' (wrong) after permutation, revealing that the model memorizes option position rather than content. Under \emph{question negation}, ``Which hand does the woman use?'' becomes ``Which hand does NOT the woman use?'' yet the model returns the identical answer, showing insensitivity to logical polarity.

These failure modes motivate the duality consistency reward in SAGE: by penalizing such inconsistencies during training, the model learns to disentangle spatial reasoning from superficial visual and textual cues.

\begin{figure}[t]
\centering
\includegraphics[width=\linewidth]{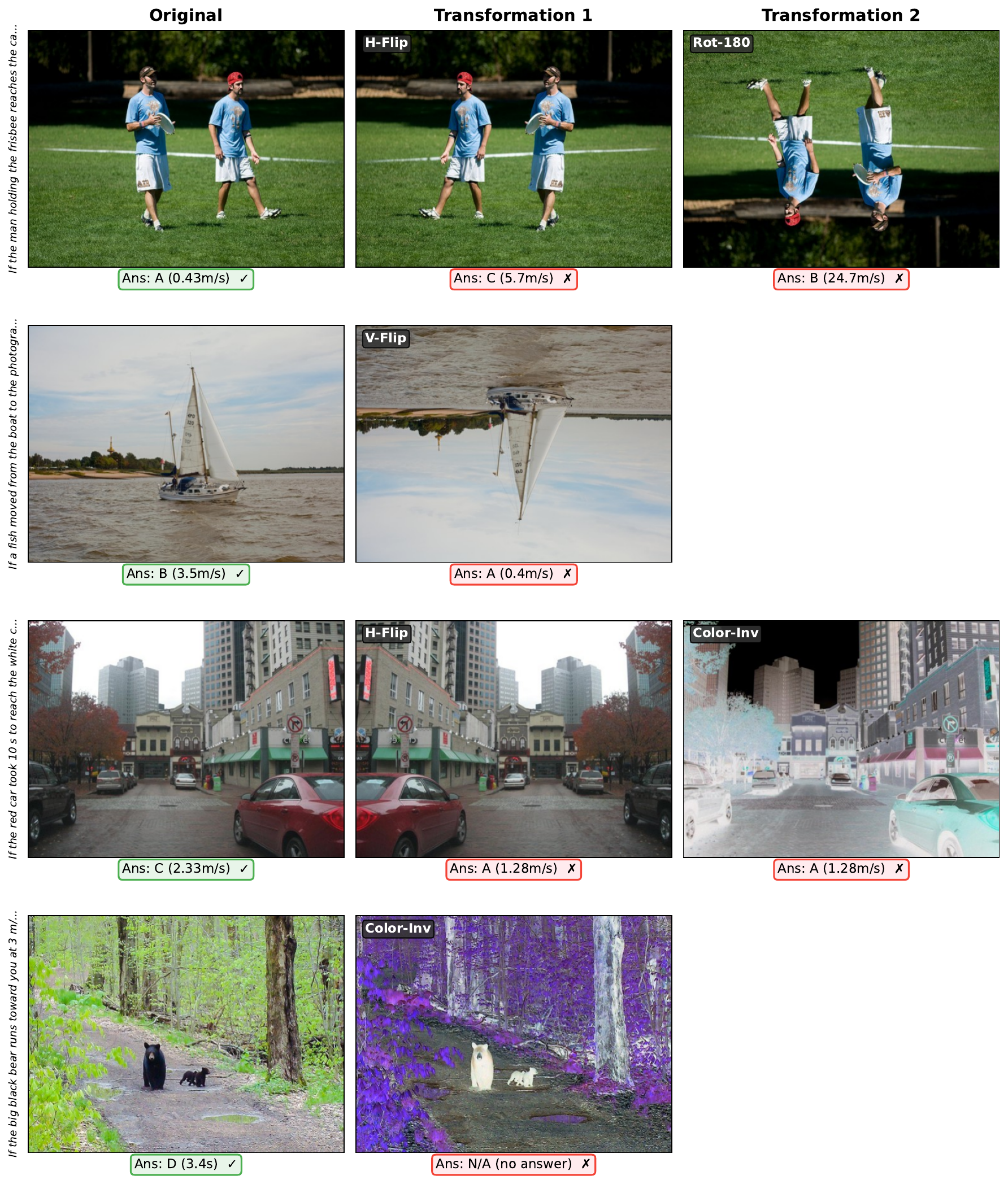}
\caption{Visual duality operation failure cases. Each row shows one operation type: horizontal flip, vertical flip, rotation, and color inversion. Distance, speed, and time answers should be invariant under these transformations, but the model produces inconsistent values (red). Row~4 shows color inversion causes complete recognition failure for the ``black bear.''}
\label{fig:case_visual}
\end{figure}

\begin{figure}[t]
\centering
\includegraphics[width=\linewidth]{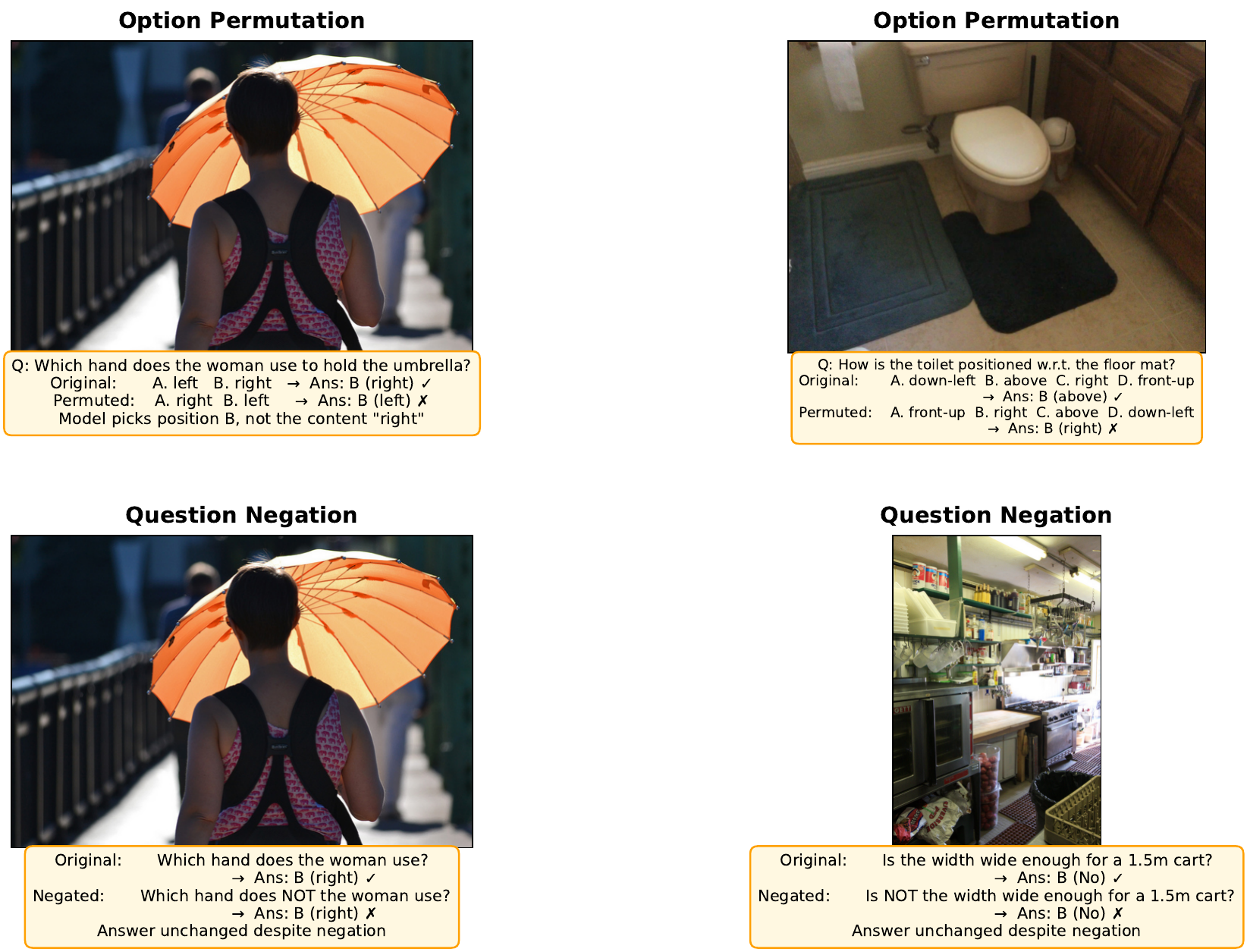}
\caption{Linguistic duality operation failure cases. \textbf{Top}: option permutation reverses the order of choices; the model selects the same position (B) rather than tracking the content. \textbf{Bottom}: question negation inserts ``NOT''; the model returns identical answers, ignoring the logical reversal.}
\label{fig:case_linguistic}
\end{figure}

\newpage
\clearpage

\end{document}